  \providecommand\BibTeX{{%
    \normalfont B\kern-0.5em{\scshape i\kern-0.25em b}\kern-0.8em\TeX}}}
\newcommand{\argmin}{\mathop{argmin}}
\newcommand{\zvec}[1]{\boldsymbol{#1}} % for normal variables
\newcommand{\zset}[1]{\mathbb{#1}}
\newcommand{\zrv}[1]{\mathrm{#1}} % for random variables (assume no vectors)
\newcommand{\htheta}{h}
\newcommand{\xF}{\zvec{x}^\texttt{F}}
\newcommand{\xFScalar}{x^\texttt{F}}
\newcommand{\xSCFScalar}{x^\texttt{SCF}}
\newcommand{\xCFE}{\zvec{x}^\texttt{CFE}}
\newcommand{\xCFEnearest}{\zvec{x}^\texttt{*CFE}}
\newcommand{\xSCF}{\zvec{x}^\texttt{SCF}}
\newcommand{\xSCFnearest}{\zvec{x}^\texttt{*SCF}}
\newcommand{\xAny}{\zvec{x}}
\newcommand{\doop}{\mathrm{do}}
\newcommand{\pa}{\zvec{\mathrm{pa}}}
\newcommand{\paf}{\zvec{\mathrm{pa}}^\texttt{F}}
\newcommand{\pascf}{\zvec{\mathrm{pa}}^\texttt{SCF}}
\newcommand{\scmModel}{\mathcal{M}}
\newcommand{\scmGraph}{\mathcal{G}}
\newcommand{\ffinverse}[1]{\zset{F}(\zset{F}^{-1}({#1}))}
\newcommand{\fafinverse}[1]{\zset{F}_{\actionSet}(\zset{F}^{-1}({#1}))}
\newcommand{\fastarfinverse}[1]{\zset{F}_{\actionSetStar}(\zset{F}^{-1}({#1}))}
\newcommand{\deltaVector}{\zvec{\delta}}
\newcommand{\deltaVectorStar}{\zvec{\delta}^*}
\newcommand{\deltaScalar}{\delta}
\newcommand{\deltaScalarStar}{\delta^*}
\newcommand{\actionScalar}{a}
\newcommand{\actionSet}{\mathbf{A}}
\newcommand{\actionSetStar}{\mathbf{A}^*}
\newcommand{\actionCFE}{\mathbf{A}^{\texttt{CFE}}}
    \newcommand{\amirm}[1]{}
    \newcommand{\amirr}[1]{}
\theoremstyle{definition}
\newtheorem{definition}{Definition}[section]
\tikzset{
    -Latex, auto, node distance = 0.5 cm and 0.5 cm, semithick,
    state/.style = {circle, draw, minimum width = 0.7 cm},
    inter/.style = {rectangle, draw, minimum width = 0.7 cm, minimum height = 0.7 cm},
    point/.style = {circle, draw, inner sep = 0.04cm, fill, node contents = {}},
    bidirected/.style = {Latex-Latex,dashed},
    el/.style = {inner sep=2pt, align=left, sloped}
}
\tikzset{%
  >=latex, % option for nice arrows
  inner sep=0pt,%
  outer sep=2pt,%
  mark coordinate/.style={inner sep=0pt,outer sep=0pt,minimum size=3pt,
    fill=black,circle}%
}
\begin{document}

% \rowcolors{2}{gray!20}{white}

%%
%% The "title" command has an optional parameter,
%% allowing the author to define a "short title" to be used in page headers.
% \title{SoK: A technical review of Algorithmic Recourse:\\Counterfactual Explanations and Consequential Interventions}
\title{Algorithmic Recourse:\\from Counterfactual Explanations to Interventions}

%%
%% The "author" command and its associated commands are used to define
%% the authors and their affiliations.
%% Of note is the shared affiliation of the first two authors, and the
%% "authornote" and "authornotemark" commands
%% used to denote shared contribution to the research.

\author{Amir-Hossein Karimi}
% \email{amirhkarimi@gmail.com}
\affiliation{%
  \institution{MPI-IS, Germany}
  \institution{ETH Z\"urich, Switzerland}
}
\author{Bernhard Sch\"olkopf}
\affiliation{%
  \institution{MPI-IS, Germany}
}
\author{Isabel Valera}
\affiliation{%
  \institution{MPI-IS, Germany}
  \institution{Saarland University, Germany}
}

%%
%% By default, the full list of authors will be used in the page
%% headers. Often, this list is too long, and will overlap
%% other information printed in the page headers. This command allows
%% the author to define a more concise list
%% of authors' names for this purpose.
\renewcommand{\shortauthors}{Karimi, Sch{\"o}lkopf, Valera}

%%
%% The abstract is a short summary of the work to be presented in the
%% article.
% \begin{abstract}
%   TBD
% \end{abstract}

% %%
% %% The code below is generated by the tool at http://dl.acm.org/ccs.cfm.
% %% Please copy and paste the code instead of the example below.
% %%
% \begin{CCSXML}
% <ccs2012>
%  <concept>
%   <concept_id>10010520.10010553.10010562</concept_id>
%   <concept_desc>Computer systems organization~Embedded systems</concept_desc>
%   <concept_significance>500</concept_significance>
%  </concept>
%  <concept>
%   <concept_id>10010520.10010575.10010755</concept_id>
%   <concept_desc>Computer systems organization~Redundancy</concept_desc>
%   <concept_significance>300</concept_significance>
%  </concept>
%  <concept>
%   <concept_id>10010520.10010553.10010554</concept_id>
%   <concept_desc>Computer systems organization~Robotics</concept_desc>
%   <concept_significance>100</concept_significance>
%  </concept>
%  <concept>
%   <concept_id>10003033.10003083.10003095</concept_id>
%   <concept_desc>Networks~Network reliability</concept_desc>
%   <concept_significance>100</concept_significance>
%  </concept>
% </ccs2012>
% \end{CCSXML}

% \ccsdesc[500]{Computer systems organization~Embedded systems}
% \ccsdesc[300]{Computer systems organization~Redundancy}
% \ccsdesc{Computer systems organization~Robotics}
% \ccsdesc[100]{Networks~Network reliability}

%%
%% Keywords. The author(s) should pick words that accurately describe
%% the work being presented. Separate the keywords with commas.
\keywords{algorithmic recourse, counterfactual explanations, minimal interventions, interpretable machine learning}

% %% A "teaser" image appears between the author and affiliation
% %% information and the body of the document, and typically spans the
% %% page.
% \begin{teaserfigure}
%   \includegraphics[width=\textwidth]{sampleteaser}
%   \caption{Seattle Mariners at Spring Training, 2010.}
%   \Description{Enjoying the baseball game from the third-base
%   seats. Ichiro Suzuki preparing to bat.}
%   \label{fig:teaser}
% \end{teaserfigure}

%%
%% This command processes the author and affiliation and title
%% information and builds the first part of the formatted document.

\begin{abstract}
As machine learning is increasingly used to inform consequential decision-making (e.g., pre-trial bail and loan approval), it becomes important to explain how the system arrived at its decision, and also suggest actions to achieve a favorable decision.
Counterfactual explanations --``how the world would have (had) to be different for a desirable outcome to occur''-- aim to satisfy these criteria.
Existing works have primarily focused on designing algorithms to obtain counterfactual explanations for a wide range of settings.
However, it has largely been overlooked that ultimately, one of the main objectives is to allow people to act rather than just understand.
In layman's terms, counterfactual explanations inform an individual where they need to get to, but not how to get there.
In this work, we rely on causal reasoning to caution against the use of counterfactual explanations as a recommendable set of actions for recourse.
Instead, we propose a shift of paradigm from \emph{recourse via nearest counterfactual explanations} to \emph{recourse through minimal interventions}, shifting the focus from explanations to interventions. 

\end{abstract}

\maketitle

\newcommand{\negspace}{-0.1em}

\vspace{\negspace}
\vspace{5mm}
\section{Introduction}
\label{sec:010introduction}
\vspace{\negspace}
\begin{figure}[t]
  \begin{subfigure}[c]{.4\linewidth}
    \begin{center}
      \begin{tikzpicture}
        % x node set with absolute coordinates
        \node[state, fill=gray!60] (x1) at (0,0) {$\zrv{X}_1$};
        \node[state, fill=gray!60] (x2) [below = 0.6 cm of x1] {$\zrv{X}_2$};
        \node[state, fill=gray!60] (y)  [below right = 0.3 cm and 0.8 cm of x1] {$\hat{\zrv{Y}}$};
        
        \node[state] (u1) [left= 0.3cm of x1] {$\zrv{U}_1$};
        \node[state] (u2) [left= 0.3cm of x2] {$\zrv{U}_2$};

        % Directed edge
        \path (u1) edge (x1);
        \path (u2) edge (x2);
        \path (x1) edge (x2);
        \path (x1) edge [bend left  = 10] (y);
        \path (x2) edge [bend right = 10] (y);
      \end{tikzpicture}
    \end{center}
  \end{subfigure}%
  \begin{subfigure}[c]{.6\linewidth}
    \begin{center}
      \[
      \left.
      \begin{aligned}
        \zrv{X}_1 &\coloneqq \zrv{U}_1                     \\
        \zrv{X}_2 &\coloneqq f_2 (\zrv{X}_1) + \zrv{U}_2 ~~\\
      \end{aligned}
      \right\} ~~ \scmModel 
      \]
      \[
      \begin{aligned}
        \hat{\zrv{Y}} &= \htheta(\zrv{X}_1, \zrv{X}_2)
      \end{aligned}
      \]
    %   \left.
    %   \begin{aligned}
    %     \zrv{X}_1 &\coloneqq \zrv{U}_1                     \\
    %     \zrv{X}_2 &\coloneqq f_2 (\zrv{X}_1) + \zrv{U}_2 ~~\\
    %   \end{aligned}
    %   \right\} ~~ \scmModel \\
    %   \begin{aligned}
    %     \hat{\zrv{Y}} &= \htheta(\zrv{X}_1, \zrv{X}_2)
    %   \end{aligned}
    % \[
    %   \scmModel \begin{cases}
    %     \zrv{X}_1 &\coloneqq \zrv{U}_1                     \\
    %     \zrv{X}_2 &\coloneqq f_2 (\zrv{X}_1) + \zrv{U}_2 ~~\\
    %   \end{cases}\\
    % %   \right\} ~~ \scmModel \\
    %   \begin{aligned}
    %     \hat{\zrv{Y}} &= \htheta(\zrv{X}_1, \zrv{X}_2)
    %   \end{aligned}
    %  \]
    \end{center}
  \end{subfigure}
  \caption{Illustration of an example causal generative process governing the world, showing both the graphical model, $\scmGraph$, and the structural causal model, $\scmModel$, \cite{pearl2000causality}.
  In this example, $\zrv{X}_1$ represents an individual's annual salary, $\zrv{X}_2$ is bank balance, and $\hat{\zrv{Y}}$ is the output of a fixed deterministic predictor $\htheta$, predicting the eligibility of an individual to receive a loan.
  }
  \label{figure:working_example}
\end{figure}

Predictive models are being increasingly used to support consequential decision-making in a number of contexts, e.g., denying a loan, rejecting a job applicant, or prescribing life-altering medication.
As a result, there is mounting social and legal pressure \cite{voigt2017eu} to provide explanations that help the affected individuals to understand ``why a prediction was output'', as well as  ``how to act'' to obtain a desired outcome. 
Answering these questions, for the different stakeholders involved, is one of the main goals of explainable machine learning \cite{doshi2017towards, gunning2019darpa, kodratoff1994comprehensibility, lipton2018mythos, murdoch2019definitions, rudin2019stop, ruping2006learning}.

In this context, several works have proposed to explain a model's predictions of an affected individual  using \emph{counterfactual explanations}, which are defined as statements of ``how the world would have (had) to be different for a desirable outcome to occur'' \cite{wachter2017counterfactual}.
Of specific importance are \emph{nearest counterfactual explanations}, presented as the most similar \emph{instances} to the feature vector describing the individual, that result in the desired prediction from the model \cite{karimi2020model, laugel2017inverse}.
A closely related term is \emph{algorithmic recourse} -- the actions required for, or ``the systematic process of reversing unfavorable decisions by algorithms and bureaucracies across a range of counterfactual scenarios'' -- which is argued as the underwriting factor for temporally extended agency and trust \cite{venkatasubramanianphilosophical}.

Counterfactual explanations have shown promise for practitioners and regulators to validate a model on metrics such as fairness and robustness \cite{karimi2020model, sharma2019certifai, ustun2019actionable}. However, in their raw form, such explanations do not seem to fulfill one of the primary objectives of ``explanations as a means to help a data-subject \emph{act} rather than merely \emph{understand}'' \cite{wachter2017counterfactual}.

The translation of counterfactual explanations to recourse actions, i.e.,  to a recommendable set of actions to help an individual to achieve a favourable outcome,  was first explored in~\cite{ustun2019actionable}, where additional \emph{feasibility} constraints were imposed to support the concept of actionable features (e.g., prevent asking the individual to reduce their age or change their race).  
While a step in the right direction, this work and others that followed \cite{karimi2020model, mothilal2019explaining, poyiadzi2019face, sharma2019certifai} implicitly assume that the set of actions resulting in the desired output would directly follow from the counterfactual explanation.
This arises from the assumption that ``what would \emph{have had to be} in the past'' (retrodiction) not only translates to ``what \emph{should be} in the future'' (prediction) but also to ``what \emph{should be done} in the future'' (recommendation)~\cite{sep-counterfactuals}.
We challenge this assumption and attribute the shortcoming of existing approaches to their lack of consideration for real-world properties, specifically the \emph{causal relationships} governing the world in which actions will be performed.

% \newpage
For ease of exposition, we present the following examples (see \cite{barocas2020hidden} for additional examples).

\textbf{Example 1:} Consider, for example, the setting in Figure \ref{figure:working_example} where an individual has been denied a loan and seeks an explanation and recommendation on how to proceed.
This individual has an annual salary ($\zrv{X}_1$) of $\$75,000$ and an account balance ($\zrv{X}_2$) of $\$25,000$ and the predictor grants a loan based on the binary output of $\htheta = \mathrm{sgn}(\zrv{X}_1 + 5 \cdot \zrv{X}_2 - \$225,000)$.
Existing approaches may identify nearest counterfactual explanations as another individual with an annual salary of $\$100,000$ ($+\%33$) or a bank balance of $\$30,000$ ($+\%20$), therefore encouraging the individual to reapply when either of these conditions are met.
On the other hand, bearing in mind that actions take place in a world where home-seekers save $\%30$ of their salary (i.e., $\zrv{X}_2 \coloneqq 3 / 10 \cdot \zrv{X}_1 + \zrv{U}_2$), a salary increase of only $\%14$ to $\$85,000$ would automatically result in $\$3,000$ additional savings, with a net positive effect on the loan-granting algorithm's decision.

\textbf{Example 2:} Consider now another setting of Figure \ref{figure:working_example} where an agricultural team wishes to increase the yield of their rice paddy.
While many factors influence yield = $\htheta($temperature, solar radiation, water supply, seed quality, ...), the primary actionable capacity of the team is their choice of paddy location.
Importantly, the altitude at which the paddy sits has an effect on other variables.
For example, the laws of physics may imply that a 100$m$ increase in elevation results in a \ang{1}C decrease in temperature on average.
Therefore, it is conceivable that a counterfactual explanation suggesting an increase in elevation for optimal yield, without consideration for downstream effects of the elevation increase on other variables, may actually result in the prediction \emph{not} changing.

The two examples above illustrate the pitfalls of generating recourse actions directly from counterfactual explanations without consideration for the structure of the world in which the actions will be performed.
Actions derived directly from counterfactual explanations may ask too much effort from the individual (\textbf{Example~1}) or may not even result in the desired output (\textbf{Example~2}).

In this paper, we remedy this situation via a fundamental reformulation of the recourse problem, 
where we rely on causal reasoning to incorporate knowledge of causal dependencies into the process of recommending recourse actions, that if acted upon would result in a counterfactual instance that favourably changes the output of the predictive model.
In more detail, we first provide a causal analysis to illuminate the intrinsic limitations of the setting in which actions directly follow counterfactual explanations.  
Importantly, we show that even when equipped with knowledge of causal dependencies after-the-fact, the actions derived from pre-computed (nearest) counterfactual explanations may prove sub-optimal, or directly, unfeasible.
Second, to address the above limitations, we emphasize that, from a causal perspective, actions correspond to interventions which not only model the change in the intervened-upon variable, but also the downstream effects of this intervention on the rest of the (non-intervened-upon) variables.
This insight allows us to propose a \emph{recourse through minimal interventions} problem, whose solution informs stakeholders on how to act in addition to understand. 
We complement this result with a commentary on the form of interventions, and with a more  general definition of feasibility beyond actionability. 
Finally, we provide a detailed discussion on both the importance  and the practical limitations of incorporating causal reasoning in the formulation of  recourse. % the formulation of the algorithmic recourse problem. 

\vspace{\negspace}
\section{Algorithmic Recourse via Counterfactual Explanations}
\label{sec:020background}
\vspace{\negspace}
\emph{Counterfactual explanations} (CFE) are statements of ``how the world would have (had) to be different for a desirable outcome to occur'' \cite{wachter2017counterfactual}.
In the context of explainable machine learning, the literature has focused on finding \emph{nearest counterfactual explanations} (i.e., instances),\footnote{A counterfactual instance can be from the dataset \cite{poyiadzi2019face, wexler2019if} or generated \cite{karimi2020model, ustun2019actionable, wachter2017counterfactual}.}~which result in the desired prediction while incurring the smallest change to the individual's feature vector, as measured by a context-dependent dissimilarity metric, $\mathrm{dist} \colon \mathcal{X} \times \mathcal{X} \to \mathbb{R}_+$.
This problem has been formulated as the following optimization problem \cite{wachter2017counterfactual}:
\begin{equation}
  \begin{aligned}
    \xCFEnearest \in \argmin_{\xAny} \quad \mathrm{dist}(\xAny, \xF)         %\\
                   \quad \mathrm{s.t.}      \quad \htheta(\xAny) \not= \htheta(\xF), %\\
                                       \xAny \in \mathcal{P},%\mathrm{lausible} \enspace,
  \end{aligned}
  \label{eqn:nearest-cf}
\end{equation}
where $\xF \in \mathcal{X}$ is the factual instance; $\xCFEnearest \in \mathcal{X}$ is a (perhaps not unique) nearest counterfactual instance; $\htheta$ is the fixed binary predictor; and $\mathcal{P}$ is an optional set of \emph{plausibility} constraints, e.g., the counterfactual instance be from a relatively high-density region of the input space~\cite{joshi2019towards, poyiadzi2019face}.

Most of the existing approaches in the counterfactual explanations literature have focused on providing solutions to the optimization problem in~\eqref{eqn:nearest-cf}, by exploring semantically meaningful distance/dissimilarity functions $\mathrm{dist}(\cdot, \cdot)$ between individuals  (e.g., $\ell_0, \ell_1, \ell_\infty$, percentile-shift), accommodating different predictive models $\htheta$ (e.g., random forest, multilayer perceptron), and realistic plausibility constraints, $\mathcal{P}$. 
In particular, \cite{dhurandhar2018explanations, mothilal2019explaining, wachter2017counterfactual} solve \eqref{eqn:nearest-cf} using gradient-based optimization; \cite{russell2019efficient,ustun2019actionable} employ mixed-integer linear program solvers to support mixed numeric/binary data; \cite{poyiadzi2019face} use graph-based shortest path algorithms; \cite{laugel2017inverse} use a heuristic search procedure by growing spheres around the factual instance; \cite{guidotti2018local, sharma2019certifai} build on genetic algorithms for model-agnostic behavior; and \cite{karimi2020model} solve \eqref{eqn:nearest-cf} using satisfiability solvers with closeness guarantees.

Although nearest counterfactual explanations provide an \emph{understanding} of the most similar set of features that result in the desired prediction, they stop short of giving explicit \emph{recommendations} on how to act to realize this set of features.
The lack of specification of the actions required to realize $\xCFEnearest$ from $\xF$ leads to uncertainty and limited agency for the individual seeking recourse.
To shift the focus from explaining a decision to providing recommendable actions to achieve recourse, \citet{ustun2019actionable} reformulated \eqref{eqn:nearest-cf} as: 
%
% \begin{equation}
%   \begin{aligned}
%     \deltaVectorStar \in \argmin_{\deltaVector} &\quad \mathrm{cost}(\deltaVector; \xF)                                %\\
%      \quad
%                                     \mathrm{s.t.} \quad \htheta(\xCFE) \not= \htheta(\xF),                     %\\
%                                                   %&\quad 
%                                                   \xCFE =  \xF + \deltaVector,                          %\\
%                                                   %&\quad 
%                                                   \xCFE \in \mathcal{P},%\mathrm{lausible},             % \\
%                                                   %&\quad 
%                                                  \deltaVector \in \mathcal{F},%\mathrm{easible} \enspace,
%   \end{aligned}
%   \label{eqn:nearest-cf-additive}
% \end{equation}
\begin{equation}
  \begin{aligned}
    \deltaVectorStar \in \argmin_{\deltaVector} \; \mathrm{cost}(\deltaVector; \xF) \quad \mathrm{s.t.} &\; \htheta(\xCFE) \not= \htheta(\xF), \\
    																									&\; \xCFE =  \xF + \deltaVector, \\
                                                                                                        &\; \xCFE \in \mathcal{P}, \; \deltaVector \in \mathcal{F},
  \end{aligned}
  \label{eqn:nearest-cf-additive}
\end{equation}
\noindent where $\mathrm{cost}(\cdot; \xF) \colon \mathcal{X} \times \mathcal{X} \to \mathbb{R}_+$ is a user-specified cost that encodes preferences between feasible actions from $\xF$, and $\mathcal{F}$ and $\mathcal{P}$ are optional sets of feasibility and plausibility constraints,\footnote{Here, ``feasible'' means \emph{possible to do}, whereas ``plausible'' means \emph{possibly true, believable or realistic}. Optimization terminology refers to both as \emph{feasibility} sets.}~restricting the actions and the resulting counterfactual explanation, respectively.
The feasibility constraints in~\eqref{eqn:nearest-cf-additive}, as introduced in~\cite{ustun2019actionable}, aim at restricting the set of features that the individual may act upon.
For instance, recommendations should not ask  individuals to change their gender or reduce their age. 
Henceforth, we refer to the optimization problem in~\eqref{eqn:nearest-cf-additive} as the \emph{CFE-based recourse} problem.

\vspace{\negspace}
\section{A Causal Perspective of Algorithmic recourse} %: Actions as Interventions}
\label{sec:030causal_interpretation}
\vspace{\negspace}
The seemingly innocent reformulation of the counterfactual explanation problem in~\eqref{eqn:nearest-cf} as a recourse problem in~\eqref{eqn:nearest-cf-additive} is founded on two assumptions:

\vspace{1mm}

\noindent\textbf{Assumption~1:} the feature-wise difference between factual and nearest counterfactual instances, $\deltaVectorStar = \xCFEnearest - \xF$, directly translates to the minimal action set, $\actionCFE$, such that performing the actions in $\actionCFE$ starting from $\xF$ will result in $\xCFEnearest$; and 

\vspace{1mm}

\noindent\textbf{Assumption~2:} there is a 1-1 mapping between $\mathrm{dist}(\cdot, \cdot)$ and $\mathrm{cost}(\cdot; \cdot)$, whereby larger actions incur larger distance and higher cost.

\vspace{1mm}

Unfortunately, these assumptions only hold in restrictive settings, rendering the solution of~\eqref{eqn:nearest-cf-additive} \emph{sub-optimal} or \emph{infeasible} in many real-world scenarios. 
%
% Specifically, \textbf{Assumption~1} holds only
% %
% %the individual may change the value of as subset of variables (input features to the predictive models) without affecting the value of the other features, for which either (i) features are independent on each other, or (ii) the individual is able to enforce such independence via an action. 
% if the individual applies effort in a world where changing a variable does not affect other variables (i.e., no variable causally depends on the acted-upon variables); %\bernhard{to a causality reviewer, this would sound odd: the statement that intervening on one variable does not affect others is not identical ("i.e.") to a causal statement -- maybe rewrite the text in parentheses?} or if
% %
% % (ii) the individual changes the value of a subset of variables while simultaneously enforcing that the value of all other variables remain unchanged (i.e., breaking dependencies between features). 
% %
% Beyond the \emph{sub-optimality} that arises from assuming/reducing to an independent world in (i), and disregarding the \emph{feasibility} of non-altering actions in (ii), non-altering actions may naturally incur a penalty which is not captured in the current definition of cost. Hence, \textbf{Assumption~2} does not hold either.
%
Specifically, \textbf{Assumption~1} holds only if (i) the individual applies effort in a world where changing a variable does not have downstream other variables (i.e., features are independent from each other); or if (ii) the individual changes the value of a subset of variables while simultaneously enforcing that the value of all other variables remain unchanged (i.e., breaking dependencies between features). Beyond the \emph{sub-optimality} that arises from assuming/reducing to an independent world in (i), and disregarding the \emph{feasibility} of non-altering actions in (ii), non-altering actions may naturally incur a cost which is not captured in the current definition of cost, and hence \textbf{Assumption~2} does not hold either.
Therefore, except in trivial cases where the model designer actively inputs pair-wise independent features to $\htheta$, generating recommendations from counterfactual explanations in this manner, i.e., ignoring the dependencies between features, warrants reconsideration.
\hbox{Next, we formalize these shortcomings using causal reasoning.}

\subsection{Actions as  Interventions}
Let $\scmModel \in \Pi$ denote the structural causal model (SCM) capturing all inter-variable causal dependencies in the real world.
$\scmModel = \langle \zset{F}, \zset{X}, \zset{U} \rangle$ is characterized by the endogenous (observed) variables, $\zset{X} \in \mathcal{X}$, the exogenous variables, $\zset{U} \in \mathcal{U}$, and a sequence of structural equations $\zset{F} \colon \mathcal{U} \to \mathcal{X}$, describing how endogenous variables can be (deterministically) obtained from the exogenous variables~\cite{pearl2000causality,spirtes2000causation}.
Often, $\scmModel$ is illustrated using a directed graphical model, $\scmGraph$ (see, e.g., Figure~\ref{figure:working_example}).

From a causal perspective, actions may be carried out via \emph{structural interventions}, \hbox{$\actionSet \colon \Pi \to \Pi$,} which can be thought of as a transformation between SCMs \cite{pearl1994probabilistic, pearl2000causality}.
A set of interventions can be constructed as $\actionSet = \doop ( \{ \zrv{X}_i \coloneqq \actionScalar_i \}_{i \in I} )$ where $I$ contains the indices of the subset of endogenous variables to be intervened upon.
In this case, for each $i \in I$, the $\doop$-operator replaces the structural equation for the $i$-th endogenous variable  $\zrv{X}_i$ in $\zset{F}$ with $\zrv{X}_i \coloneqq \actionScalar_i$.
Correspondingly, graph surgery is performed on $\scmGraph$, severing graph edges incident on an intervened variable, $\zrv{X}_i$. %, with a single assignment corresponding to the value of the intervention, i.e., $\actionScalar_i$.
Thus, performing the actions $\actionSet$ in a world $\scmModel$ yields the post-intervention world model $\scmModel_\actionSet$ with structural equations $\zset{F}_\actionSet = \{ {F}_i \}_{~ i \not\in I} \cup \{ \zrv{X}_i \coloneqq \actionScalar_i \}_{i \in I}$. %The effect of 
Structural interventions are illustrated in Figure~\ref{figure:interventions_example}.

%From this perspective, 
% \comment{world}
\emph{Structural interventions} are used to predict the effect of actions on the world as a whole (i.e., how $\scmModel$ becomes $\scmModel_\actionSet$). In the context of recourse, we aim to model the effect of actions on one individual's situation (i.e., how $\xF$ becomes $\xSCF$) to ascertain whether or not the desirable outcome is achieved (i.e., $\htheta(\xF) \neq \htheta(\xSCF)$).
We compute individual-level effects using \emph{structural counterfactuals}~\cite{pearl2016causal}. %, as explained below.

% Assuming that $\scmModel$ factorizes as a directed acyclic graph (DAG), and with full specification of $\zset{F}$ (and $\zset{F}^{-1}$, such that $\zset{F}(\zset{F}^{-1}(\xAny)) = \xAny$),  $\zset{X}$ can be uniquely determined given the value of $\zset{U}$ (and vice-versa).

Assuming \emph{causal sufficiency} of $\scmModel$ (i.e., no hidden confounders), and full specification of an invertible $\zset{F}$ (such that $\zset{F}(\zset{F}^{-1}(\xAny)) = \xAny$),  $\zset{X}$ can be uniquely determined given the value of $\zset{U}$ (and vice-versa).
Hence, one can determine the distinct values of exogenous variables that give rise to a particular realization of the endogenous variables, $\{\zrv{X}_i = \xFScalar_i\}_i \subseteq \mathcal{X}$, as $\zset{F}^{-1}(\xF)$~\cite{pearl2016causal}.\footnote{For notational simplicity, we interchangeably use sets and vectors, e.g., $\{\zrv{X}_i = \xFScalar_i\}_i \subseteq \mathcal{X}$ and $\xF \in \mathcal{X}$.}
As a result, we can compute \emph{any} structural counterfactual query $\xSCF$ %, which automatically account for inter-variable causal dependencies,   
for an individual $\xF$ as $\xSCF = \fafinverse{\xF}$. In our context, that is: 
``if an individual $\xF$ observed in world $\scmModel$ performs the set of actions $\actionSet$,  what \emph{will be}  the resulting individual's feature vector $\xSCF$''.\footnote{Queries such as this subsume both \emph{retrospective/subjunctive/counterfactual} (``what would have been the value of'') and \emph{prospective/indicative/predictive} (``what will be the value of'') conditionals \cite{sep2014conditionals, lagnado2013causal, sep2019counterfactuals}, as long as we assume that the laws governing the world, $\zset{F}$, are stationary.}%  This is discussed in more detailmore on this in section 5}}
%``given model $\scmModel$ and having observed $\xF$,  what \emph{is} the value of all endogenous variables if the set of actions $\actionSet$ is performed".

\begin{figure}[t]
  \begin{subfigure}[c]{.55\linewidth}
    \begin{center}
      \begin{tikzpicture}
        % Nodes
        \node[state, fill=gray!60] (x1) at (0,0)               {$\zrv{X}_1$};
        \node[state, fill=gray!60] (x2) [right = 0.6 cm of x1] {$\zrv{X}_2$};
        \node[state]               (u1) [left  = 0.3 cm of x1] {$\zrv{U}_1$};
        \node[state]               (u2) [right = 0.3 cm of x2] {$\zrv{U}_2$};

        % Directed edge
        \path (u1) edge (x1);
        \path (u2) edge (x2);
        \path (x1) edge (x2);
      \end{tikzpicture}
    \end{center}
  \end{subfigure}%
  ~~\begin{subfigure}[c]{.45\linewidth}
  % \vspace{-5mm}
    {\small
      \begin{center}
        \[
          \left.
          \begin{aligned}
            \zrv{X}_1 &\coloneqq \zrv{U}_1 \\
            \zrv{X}_2 &\coloneqq f_2 (\zrv{X}_1) + \zrv{U}_2
          \end{aligned}
          \right\} ~~ \scmModel
        \]
      \end{center}
    }
  \end{subfigure} \\
  \begin{subfigure}[c]{.55\linewidth}
    \begin{center}
      \begin{tikzpicture}
        % Nodes
        \hspace{0.65cm}
        \node[inter, fill=gray!60] (x1) at (0,0)                {$\zrv{X}_1$};
        \node[state, fill=gray!60] (x2) [right = 0.5 cm of x1] {$\zrv{X}_2$};
        \node[state]               (u2) [right = 0.3 cm of x2]  {$\zrv{U}_2$};

        % Directed edge
        \path (u2) edge (x2);
        \path (x1) edge (x2);
      \end{tikzpicture}
    \end{center}
  \end{subfigure}%
  ~~\begin{subfigure}[c]{.45\linewidth}
  % \vspace{-5mm}
    {\small
      \begin{center}
        \[
          \left.
          \begin{aligned}
            \zrv{X}_1 &\coloneqq a_1 \\
            \zrv{X}_2 &\coloneqq f_2 (\zrv{X}_1) + \zrv{U}_2 
          \end{aligned}
          \right\} ~~ \scmModel_1
        \]
      \end{center}
    }
  \end{subfigure} \\
  \begin{subfigure}[c]{.55\linewidth}
    \begin{center}
      \begin{tikzpicture}
        % Nodes
        \hspace{-0.575cm}
        \node[state, fill=gray!60] (x1) at (0,0)                 {$\zrv{X}_1$};
        \node[inter, fill=gray!60] (x2) [right = 0.65 cm of x1] {$\zrv{X}_2$};
        \node[state]               (u1) [left  = 0.3 cm of x1]   {$\zrv{U}_1$};

        % Directed edge
        \path (u1) edge (x1);
      \end{tikzpicture}
    \end{center}
  \end{subfigure}%
  ~~\begin{subfigure}[c]{.45\linewidth}
  % \vspace{-5mm}
    {\small
      \begin{center}
        \[
          \left.
          \begin{aligned}
            \zrv{X}_1 &\coloneqq \zrv{U}_1 \\
            \zrv{X}_2 &\coloneqq a_2 
          \end{aligned}
          \right\} ~~ \scmModel_2
        \]
      \end{center}
    }
  \end{subfigure} \\
  \begin{subfigure}[c]{.55\linewidth}
    %\vspace{3mm}
    \begin{center}
      \begin{tikzpicture}
        % Nodes
        %\hspace{0.05cm}
        \node[inter, fill=gray!60] (x1) at (0,0)               {$\zrv{X}_1$};
        \node[inter, fill=gray!60] (x2) [right = 0.7 cm of x1] {$\zrv{X}_2$};

        % Directed edge
        % N/A
      \end{tikzpicture}
    \end{center}
  \end{subfigure}% 
  ~~\begin{subfigure}[c]{.45\linewidth}
  % \vspace{-5mm}
    {\small
      \begin{center}
        \[
          \left.
          \begin{aligned}
            \zrv{X}_1 &\coloneqq a_1 \\
            \zrv{X}_2 &\coloneqq a_2 
          \end{aligned}
          \right\} ~~ \scmModel_3
        \]
      \end{center}
    }
  \end{subfigure}
  \caption{
    Given world model, $\scmModel$, intervening on $\zrv{X}_1$ and/or on $\zrv{X}_2$ result in different post-intervention models: $\scmModel_1 = \scmModel_{\actionSet = \{\doop(\zrv{X}_1 \coloneqq a_1)\}}$ corresponds to interventions only on $\zrv{X}_1$ with consequential effects on $\zrv{X}_2$; $\scmModel_2 = \scmModel_{\actionSet = \{\doop(\zrv{X}_2 \coloneqq a_2)\}}$ shows the result of structural interventions only on $\zrv{X}_2$ which in turn dismisses ancestral effects on this variable; and, $\scmModel_3 = \scmModel_{\actionSet = \{\doop(\zrv{X}_1 \coloneqq a_1, \zrv{X}_2 \coloneqq a_2)\}}$ is the resulting (independent world) model after intervening on both variables, i.e., the type of interventions generally assumed in the CFE-based recourse problem. %\iv{reorganize}
  }
  \label{figure:interventions_example}
  \vspace{-3mm}
\end{figure}

\subsection{Limitations of CFE-based recourse}%\bernhard{should define the acronym CFE (it's implicitly clear but better be explicit)}}
\label{sec:cfe_limitations}

%We here study the actions resulting from the CFE-based recourse problem in~\eqref{eqn:nearest-cf-additive} from a causal perspective. %the a perspective of structural intervention.
%by introducing the following definition:    
Next, we use causal reasoning to formalize the limitations of the  CFE-based recourse approach in~\eqref{eqn:nearest-cf-additive}. 
To this end, we first reinterpret the actions resulting from solving the CFE-based recourse problem, i.e., $\deltaVectorStar$,  as structural interventions by defining the set of indices of observed variables that are intervened upon, $I$.  
%
%Note that different intervention sets $I$ re
%
%The set $I$ may be any arbitrary subset of observed variables, as long as it contains the variable indeces for which $\deltaScalarStar_i>0$. 
%
We remark that, given $\deltaVectorStar$, an individual seeking recourse % in a world $\scmModel$, 
may intervene on any arbitrary subset of observed variables $I$, as long as the intervention contains the variable indices for which $\deltaScalarStar_i \neq 0$.
%opt for one of two courses of action: i) intervene only on the subset of observed variables for which $\deltaScalarStar_i>0$; or ii) additionally intervene  also on variables for which $\deltaScalarStar_i=0$. 
%
%Thus, $I$ may be any arbitrary subset of observed variables, as long as it contains the variable indices for which $\deltaScalarStar_i>0$. 
%
Now, we are in a position to define CFE-based actions as interventions, i.e., 
\begin{definition}[CFE-based actions]
\label{def:CFErecourse}
Given an individual $\xF$ in world $\scmModel$, 
 the solution of~\eqref{eqn:nearest-cf-additive},  $\deltaVectorStar$, %
and the set of indices of observed variables that are acted upon, $I$,
%$\deltaVector \in \mathcal{X}$ such that $\htheta (\xF + \deltaVector) =1$
%
a \emph{CFE-based action} refers to a set of structural interventions of the form $\actionCFE := \doop ( \{ \zrv{X}_i \coloneqq x^F_i + \deltaScalarStar_i \}_{i \in I} )$. %, with $I$ denoting the set of indices of observed variables for which $\delta_i >0$. %, that results in a structural counterfactual $\xSCF = \fafinverse{\xF}$ satisfying that $\htheta (\xSCF) =1$.
\end{definition}
Using Definition \ref{def:CFErecourse}, we can derive the following key results that provide necessary and sufficient  conditions for CFE-based actions to guarantee recourse.  
\begin{restatable}[]{proposition}{CFEACTION}
\label{prop:no_children}
A CFE-based action, $\actionCFE$, where \hbox{$I = \{i ~|~ \deltaScalarStar_i \neq 0\}$}, performed by individual $\xF$, in general results in the structural counterfactual, $\xSCF= \xCFEnearest := \xF + \deltaVectorStar$, and thus guarantees recourse (i.e., \hbox{$h(\xSCF)\neq h(\xF)$}), if and only if, the set of descendants of the acted upon variables, determined by $I$, is the empty set.
\end{restatable}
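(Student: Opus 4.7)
The plan is to reduce the biconditional to a coordinate-by-coordinate comparison between the structural counterfactual $\xSCF = \facfefinverse{\xF}$ and the target $\xCFEnearest = \xF + \deltaVectorStar$. Partition the indices into the acted-upon set $I$ and its complement. On $I$, the $\doop$-operator directly forces $\xSCFScalar_i = \xFScalar_i + \deltaScalarStar_i$, which equals the $i$-th coordinate of $\xCFEnearest$ by definition, so the entire content of the claim concerns coordinates $j \notin I$, whose post-intervention values are controlled by whether $\zrv{X}_j$ sits downstream of some variable in $I$.

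For the ``if'' direction, assume the descendant set of the acted-upon variables is empty. For every $j \notin I$, the structural equation $F_j$ is preserved by $\actionCFE$, and because $\zrv{X}_j$ has no ancestor in $I$, all of its (endogenous and exogenous) ancestors retain their factual values; invertibility and determinism of $\zset{F}$ then force $\xSCFScalar_j = \xFScalar_j$. By the very definition of $I = \{i \mid \deltaScalarStar_i \neq 0\}$ we also have $\deltaScalarStar_j = 0$, so $\xFScalar_j$ equals the $j$-th coordinate of $\xCFEnearest$. Combined with the trivial equality on $I$, this yields $\xSCF = \xCFEnearest$; since~\eqref{eqn:nearest-cf-additive} guarantees $\htheta(\xCFEnearest) \neq \htheta(\xF)$, recourse follows immediately.

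For the ``only if'' direction, suppose some $j \notin I$ is a descendant of a variable in $I$. Then at least one unaltered structural equation on a directed path from $I$ to $\zrv{X}_j$ receives a strictly modified input $\xFScalar_i + \deltaScalarStar_i$ in place of $\xFScalar_i$, so for generic $F_j$ the resulting $\xSCFScalar_j$ departs from $\xFScalar_j$; combined with $\deltaScalarStar_j = 0$, this gives $\xSCF \neq \xCFEnearest$, and the label-flip guarantee attached to $\xCFEnearest$ no longer transfers to $\xSCF$. To upgrade ``$\xSCF \neq \xCFEnearest$ in general'' into ``recourse is not guaranteed'', the cleanest route is to read the proposition's ``guarantee'' as ``holds for every SCM and predictor consistent with the premise'' and to exhibit a concrete witness for which $\htheta(\xSCF) = \htheta(\xF)$; Example~2 in the introduction (the elevation intervention whose downstream temperature effect offsets the intended change in predicted yield) supplies exactly such a witness. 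The main obstacle is resisting the stronger but false statement that non-empty descendants \emph{force} recourse to fail: pathological $F_j$ and $\htheta$ can conspire to preserve the label flip despite downstream perturbation, so the necessity must be phrased as ``recourse is not guaranteed'' rather than ``recourse fails'', which is precisely the content the proposition is meant to capture.
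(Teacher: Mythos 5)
Your proposal is correct and follows essentially the same route as the paper's proof: a coordinate-wise comparison in which intervened coordinates trivially match $\xCFEnearest$, non-descendants retain their factual values (hence match since $\deltaScalarStar_j = 0$ off $I$), and descendants generically deviate, with the same ``in general'' caveat that the paper handles by excluding degenerate (e.g.\ piecewise-constant) structural equations violating causal minimality. Your extra care in upgrading $\xSCF \neq \xCFEnearest$ to ``recourse is not guaranteed'' via an explicit witness (Example~2) is a slightly more explicit treatment of a step the paper leaves implicit, but it does not change the argument.
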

%
% \begin{proof}
% proof that if it is the empty set, the equality holds, and that if  it is not the empty set the equality may not hold. q then p, p then q (or equivalently not q, then not p).
% \end{proof}
%
\begin{restatable}[]{corollary}{CFEINDEP}
\label{coro:indep_world}
If the true world $\scmModel$ is independent, i.e, all the observed features are root-nodes, 
then CFE-based actions always guarantee recourse. %Otherwise, the post-intervention SCM $\scmModel$
\end{restatable}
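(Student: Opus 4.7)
The plan is to derive the corollary as an immediate consequence of Proposition~\ref{prop:no_children}. The key structural observation is this: if all observed features $\zrv{X}_1, \ldots, \zrv{X}_n$ are root nodes in the causal graph $\scmGraph$ underlying $\scmModel$, then every observed variable has only exogenous parents (i.e., each structural equation is of the form $\zrv{X}_i \coloneqq \zrv{U}_i$ under invertible $\zset{F}$), and there are no directed edges among observed variables. Consequently, for \emph{any} subset of indices $I$ of the observed variables, the set of descendants of $\{\zrv{X}_i\}_{i \in I}$ among the observed variables is empty.

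Given this, invoking Proposition~\ref{prop:no_children} with $I = \{i : \deltaScalarStar_i \neq 0\}$ yields that the CFE-based action $\actionCFE$ produces the structural counterfactual $\xSCF = \xF + \deltaVectorStar = \xCFEnearest$; since $\xCFEnearest$ is by construction a solution to~\eqref{eqn:nearest-cf-additive}, it satisfies $\htheta(\xCFEnearest) \neq \htheta(\xF)$, which is precisely the guarantee of recourse. I would close by briefly remarking on the robustness of this conclusion to the choice of $I$: even if an individual enlarges $I$ beyond $\{i : \deltaScalarStar_i \neq 0\}$, the extra ``no-op'' interventions $\doop(\zrv{X}_j \coloneqq \xFScalar_j + 0)$ leave $\zrv{X}_j$ at its factual value and, because $\zrv{X}_j$ is a root node, propagate nowhere, so the structural counterfactual is unchanged.

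There is essentially no obstacle here beyond being careful about definitions: one should explicitly verify that ``all observed features are root nodes'' means root nodes with respect to the subgraph induced by $\zset{X}$ (the exogenous $\zrv{U}_i$ parents are always present by construction of an SCM), so that the descendant set among observed variables really is empty. Once this is pinned down, the corollary is a one-line application of Proposition~\ref{prop:no_children}.
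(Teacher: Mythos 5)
Your proposal is correct and follows the same route as the paper: in an independent world every observed variable's descendant set is empty, so Proposition~\ref{prop:no_children} applies directly and yields $\xSCF = \xCFEnearest$, which satisfies $\htheta(\xCFEnearest) \neq \htheta(\xF)$ by construction. Your extra remarks (robustness to enlarging $I$, root-node interpretation) are fine elaborations of the same one-line argument the paper gives.
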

% \begin{proof}
% The set of descendants for each observed variable in an independent SCM is by definition the empty set. 
% \end{proof}
%
%\textit{Proof hint.}  
While the above results are formally proven in Appendix~\ref{app:proofs}, we provide a sketch of the proof below. % let us  provide an informal hint at the main points of the proofs.
If the intervened-upon variables do not have descendants, then by definition \hbox{$\xSCF = \xCFEnearest$}. Otherwise, the value of the descendants will depend on the counterfactual value of their parents, leading to a structural counterfactual that does not resemble the nearest counterfactual explanation, \hbox{$\xSCF \neq \xCFEnearest$}, and thus may not result in recourse. Moreover, in an independent world the set of descendants of all the variables is by definition the empty set. %, and changing a variable does not affect others. 

Unfortunately, the independent world assumption is not realistic, as it requires all the features selected to train the predictive model $\htheta$ to be independent of each other. Moreover, limiting changes to only those variables without descendants may unnecessarily limit the agency of the individual, e.g., in \textbf{Example~1}, restricting the individual to only changing bank balance without e.g., pursuing a new/side job to increase their income would be limiting.
Thus, for a given non-independent $\scmModel$ capturing the true causal dependencies between features, CFE-based actions require the individual seeking recourse to enforce (at least partially) an independent post-intervention model $\scmModel_{\actionCFE}$ (so that \textbf{Assumption~1} holds),  by intervening on all the observed variables for which $\delta_i \neq 0$ as well as on their descendants (even if their $\delta_i = 0$). 
However, such requirement suffers from two main issues.
First, it conflicts with \textbf{Assumption~2}, since holding the value of variables may still imply potentially \emph{infeasible} and costly interventions in $\scmModel$ to sever all the incoming edges to such variables, and even then it may not change the prediction (see \textbf{Example~2}).
Second, as will be proven in the next section (see also, \textbf{Example~1}), CFE-based actions may still be \emph{suboptimal}, as they do not benefit from the causal effect of actions towards changing the prediction.
Thus, even when equipped with knowledge of causal dependencies, recommending actions directly from counterfactual explanations in the manner of existing approaches is not satisfactory.

\vspace{\negspace}
\section{Algorithmic Recourse via\\Minimal Interventions}
\label{sec:040methodology}
\vspace{\negspace}
In the previous section, we learned that  actions which immediately follow from counterfactual explanations may require unrealistic assumptions, or alternatively, result in sub-optimal or even infeasible recommendations. 
To solve such limitations we rewrite the recourse problem so that instead of finding the minimal (independent) shift of features as in \eqref{eqn:nearest-cf-additive}, we seek the minimal cost set of actions (in the form of structural interventions) that results in a counterfactual instance yielding the favourable output from $\htheta$:
%
% Specifically, we reformulate \eqref{eqn:nearest-cf-additive} as:
%
\begin{equation}
  \begin{aligned}
    \actionSetStar \in \argmin_{\actionSet} &\quad \mathrm{cost}(\actionSet; \xF)                    \\
                              \mathrm{s.t.} &\quad \htheta(\xSCF) \not= \htheta(\xF)                 \\
                                            &\quad \xSCF =  \fafinverse{\xF}                         \\
                                            &\quad \xSCF \in \mathcal{P},%\mathrm{lausible}            
                    \quad \actionSet \in \mathcal{F},%\mathrm{easible} \enspace,
  \end{aligned}
  \label{eqn:nearest-cf-causal}
\end{equation}
\noindent where $\actionSetStar \in \mathcal{F}$ directly specifies the set of feasible actions to be performed for minimally costly recourse, with \hbox{$\mathrm{cost}(\cdot; \xF) \colon \mathcal{F} \times \mathcal{X} \to \mathbb{R}_+$}, and $\xSCFnearest = \fastarfinverse{\xF}$  denotes the resulting structural counterfactual.
We recall that,  although $\xSCFnearest$ is a counterfactual instance, it does not need to correspond to the nearest counterfactual explanation, $\xCFEnearest$, resulting from \eqref{eqn:nearest-cf-additive} (see, e.g., \textbf{Example~1}).
Importantly, using the formulation in~\eqref{eqn:nearest-cf-causal} it is now straightforward to show the suboptimality of CFE-based actions, as shown next (proof in Appendix~\ref{app:proofs}): % in the next result. 
\begin{restatable}[]{proposition}{SUBOPTIMAL}
\label{prop:suboptimality}
Given an individual $\xF$ observed in world $\scmModel \in \Pi$,  a family of feasible actions $\mathcal{F}$, and the solution of \eqref{eqn:nearest-cf-causal},  $\actionSetStar \in \mathcal{F}$. 
Assume that there exists CFE-based action $\actionCFE \in \mathcal{F}$ that achieves recourse, i.e., $\htheta(\xF) \neq \htheta(\xCFEnearest)$.  Then, $\mathrm{cost}(\actionSetStar; \xF)  \leq \mathrm{cost}(\actionCFE; \xF) $. 
\end{restatable}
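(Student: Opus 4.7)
The plan is to observe that this proposition is essentially a feasibility comparison: since \eqref{eqn:nearest-cf-causal} is a minimization over an explicitly defined feasible set, and $\actionSetStar$ is by definition its minimizer, it suffices to show that $\actionCFE$ lies in that feasible set. Once this is established, the claim $\mathrm{cost}(\actionSetStar; \xF) \leq \mathrm{cost}(\actionCFE; \xF)$ is immediate from the defining property of an argmin over a set containing $\actionCFE$.

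Concretely, I would proceed in two steps. First, verify that $\actionCFE$ satisfies each constraint of \eqref{eqn:nearest-cf-causal}: (i) $\actionCFE \in \mathcal{F}$ is given by hypothesis; (ii) writing $\xSCF_{\actionCFE} = \facfefinverse{\xF}$ for the structural counterfactual induced by $\actionCFE$, the assumption that $\actionCFE$ achieves recourse means exactly $\htheta(\xSCF_{\actionCFE}) \neq \htheta(\xF)$, which discharges the prediction-flipping constraint; (iii) the structural-counterfactual-equality constraint holds by construction of $\xSCF_{\actionCFE}$. The only subtle requirement is the plausibility constraint $\xSCF_{\actionCFE} \in \mathcal{P}$, which one must either include in the notion of ``achieves recourse'' or assume separately (and which holds vacuously when $\mathcal{P} = \mathcal{X}$). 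Second, since $\actionSetStar \in \argmin_{\actionSet \in \text{feasible}} \mathrm{cost}(\actionSet; \xF)$ and $\actionCFE$ is a feasible point, the defining property of the argmin gives the inequality.

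The main interpretive obstacle — and the one worth being explicit about in the writeup — is the exact meaning of ``$\actionCFE$ achieves recourse, i.e., $\htheta(\xF) \neq \htheta(\xCFEnearest)$''. By Proposition~\ref{prop:no_children}, $\xSCF_{\actionCFE}$ coincides with $\xCFEnearest$ in general only when the acted-upon variables have no descendants; outside that setting one must read the hypothesis as asserting that the induced structural counterfactual (rather than the symbolic object $\xCFEnearest$) flips the prediction. Under that reading the feasibility of $\actionCFE$ in \eqref{eqn:nearest-cf-causal} is immediate, and no further work is needed. It is worth emphasizing in the proof that the inequality may be strict, as illustrated by \textbf{Example~1}, precisely because \eqref{eqn:nearest-cf-causal} can exploit downstream causal effects unavailable to the CFE-based formulation.
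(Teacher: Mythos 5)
Your proposal is correct and follows essentially the same route as the paper's proof: the paper simply notes that since $\actionCFE \in \mathcal{F}$ and $\actionSetStar$ is the optimal solution of \eqref{eqn:nearest-cf-causal} over $\mathcal{F}$, the inequality follows by definition of optimality. Your additional care in checking each constraint of \eqref{eqn:nearest-cf-causal} (including the plausibility set $\mathcal{P}$ and the reading of ``achieves recourse'' as a statement about the induced structural counterfactual) is a more explicit version of the same argument, not a different one.
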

Thus, for a known causal model capturing the dependencies among observed variables, and a family of feasible interventions, the optimization problem in \eqref{eqn:nearest-cf-causal} yields \emph{Recourse through Minimal Interventions} (MINT). 
Generating minimal interventions through solving \eqref{eqn:nearest-cf-causal} requires that we be able to compute the structural counterfactual, $\xSCF$, of the individual $\xF$ in world $\scmModel$, given \emph{any} feasible action, $\actionSet$.
To this end, we consider that the SCM $\scmModel$ falls in the class of additive noise models (ANM), so that we can deterministically compute the counterfactual $\xSCF = \fafinverse{\xF}$ by performing the  \emph{Abduction-Action-Prediction} steps proposed by~\citet{pearl2016causal}.
%
% Appendix~\ref{app:working_example} provides a detailed example on how to perform these three steps, as summarized in the following closed-form assignment equation:  

\begin{figure}[t]
  \begin{subfigure}[c]{.45\linewidth}
    \begin{center}
      \begin{tikzpicture}
        % x node set with absolute coordinates
        \node[state, fill=gray!60] (x3) at (0,0) {$\zrv{X}_3$};
        \node[state, fill=gray!60] (x1) [above = 0.60cm of x3] {$\zrv{X}_1$};
        \node[state, fill=gray!60] (x2) [left  = 0.60cm of x1] {$\zrv{X}_2$};
        \node[state, fill=gray!60] (x4) [below = 0.60cm of x3] {$\zrv{X}_4$};
        \node[state, fill=gray!60] (y)  [right = 0.60cm of x3] {$\hat{Y}$};

        \node[state] (u1) [right = 0.3cm of x1] {$\zrv{U}_1$};
        \node[state] (u2) [below = 0.3cm of x2] {$\zrv{U}_2$};
        \node[state] (u3) [left =  0.1cm of x3, below =  0.05cm of u2] {$\zrv{U}_3$};
        \node[state] (u4) [left =  0.3cm of x4] {$\zrv{U}_4$};

        % Directed edge
        \path (x1) edge (x3);
        \path (x2) edge (x3);
        \path (x3) edge (x4);
        \path (x1) edge [bend left = 10] (y);
        \path (x2) edge (y);
        \path (x3) edge (y);
        \path (x4) edge [bend right = 10] (y);

        \path (u1) edge (x1);
        \path (u2) edge (x2);
        \path (u3) edge (x3);
        \path (u4) edge (x4);
      \end{tikzpicture}
    \end{center}
  \end{subfigure}%
  \begin{subfigure}[c]{.57\linewidth}
    \begin{center}
      \[
      \left.
      \begin{aligned}
        \zrv{X}_1 &\coloneqq \zrv{U}_1                              \\
        \zrv{X}_2 &\coloneqq \zrv{U}_2                              \\
        \zrv{X}_3 &\coloneqq f_3 (\zrv{X}_1, \zrv{X}_2) + \zrv{U}_3 \\
        \zrv{X}_4 &\coloneqq f_4 (\zrv{X}_3) + \zrv{U}_4          ~~\\
      \end{aligned}
      \right\} ~ \scmModel
      \]
      \[
      \begin{aligned}
        \hspace{-8mm}\hat{\zrv{Y}} &= \htheta \big( \{ \zrv{X}_i \}_{i=1}^4 \big)
      \end{aligned}
      \]
    \end{center}
  \end{subfigure}
%   \caption{Working example; see  Section \ref{section:illustration}.} % and Appendix~\ref{app:working_example} for details.}
    \caption{The structural causal model (graph and equations) for the working example and demonstration in Section~\ref{sec:040methodology}.}
  \vspace{-3mm}
  \label{figure:working_example_extended}
\end{figure}

\subsection{Working example}
Consider the model in Figure \ref{figure:working_example_extended},
% and assume that the SCM falls in the  class of additive noise models (ANM), 
where $\{\zrv{U}_i\}_{i=1}^4$ are mutually independent exogenous variables, and $\{f_i\}_{i=1}^4$ are structural (linear or nonlinear) equations.
Let $\xF = [\xFScalar_1, \xFScalar_2, \xFScalar_3, \xFScalar_4]^T$ be the observed features belonging to the (factual) individual, for whom we seek a counterfactual explanation and recommendation.
Also, let $I$ denote the set of indices corresponding to the subset of endogenous variables that are intervened  upon according to the action set $\actionSet$.
Then, we obtain a structural counterfactual, $\xSCF = \fafinverse{\xF}$, by applying the Abduction-Action-Prediction steps~\cite{pearl2013structural} as follows:
% of counterfactual reasoning~\cite{pearl2013structural} as:

\textbf{Step 1. Abduction} uniquely determines the value of all exogenous variables, $\{u_i\}_{i=1}^4$, given evidence, $\{\zrv{X}_i = \xFScalar_i\}_{i=1}^4$:
\begin{equation}
  \begin{aligned}
    u_1 &= \xFScalar_1 ,                                \\
    u_2 &= \xFScalar_2 ,                                \\
    u_3 &= \xFScalar_3 - f_3(\xFScalar_1, \xFScalar_2), \\
    u_4 &= \xFScalar_4 - f_4(\xFScalar_3) .
  \end{aligned}
  \label{equation:working_example_abduction}
\end{equation}

\textbf{Step 2. Action} modifies the SCM according to the hypothetical interventions, $\doop(\{\zrv{X}_i \coloneqq \actionScalar_i\}_{i \in I})$ (where $a_i = x^F_i + \deltaScalar_i$), yielding $\zset{F}_{\actionSet}$: % (where $a_i$ may be $x^F_i + \deltaScalar_i$):
\begin{equation}
  \begin{aligned}
    \zrv{X}_1 &\coloneqq [1 \in I] \cdot a_1 + [1 \notin I] \cdot \zrv{U}_1 , \\
    \zrv{X}_2 &\coloneqq [2 \in I] \cdot a_2 + [2 \notin I] \cdot \zrv{U}_2 , \\
    \zrv{X}_3 &\coloneqq [3 \in I] \cdot a_3 + [3 \notin I] \cdot \big( f_3(\zrv{X}_1, \zrv{X}_2) + \zrv{U}_3 \big), \\
    \zrv{X}_4 &\coloneqq [4 \in I] \cdot a_4 + [4 \notin I] \cdot \big( f_4(\zrv{X}_3) + \zrv{U}_4 \big) ,
  \end{aligned}
  \label{equation:working_example_action}
\end{equation}
where $[\cdot]$ denotes the Iverson bracket.

\textbf{Step 3. Prediction} recursively determines the values of all endogenous variables based on the computed exogenous variables $\{u_i\}_{i=1}^4$ from Step 1 and $\zset{F}_{\actionSet}$ from Step 2, as:
\begin{equation}
  \begin{aligned}
    \xSCFScalar_1 &\coloneqq [1 \in I] \cdot a_1 + [1 \notin I] \cdot \big( u_1 \big) , \\
    \xSCFScalar_2 &\coloneqq [2 \in I] \cdot a_2 + [2 \notin I] \cdot \big( u_2 \big) , \\
    \xSCFScalar_3 &\coloneqq [3 \in I] \cdot a_3 + [3 \notin I] \cdot \big( f_3(\xSCFScalar_1, \xSCFScalar_2) + u_3 \big), \\
    \xSCFScalar_4 &\coloneqq [4 \in I] \cdot a_4 + [4 \notin I] \cdot \big( f_4(\xSCFScalar_3) + u_4 \big) .
  \end{aligned}
  \label{equation:working_example_prediction}
\end{equation}

\subsection{General assignment formulation}
As we have not made any restricting assumptions about the structural equations (only that we operate with additive noise models\footnotemark~where noise variables are pairwise independent), the solution for the working example naturally generalizes to SCMs corresponding to other DAGs with more variables.
The assignment of structural counterfactual values can generally be written as:
% Putting it all together, we obtain the following closed form expression:
% in \eqref{eqn:general_assignment_formulation_hard}.

\footnotetext{We remark that the presented formulation also holds for more general SCMs (for example where the exogenous variable contribution is not additive) as long as the sequence of structural equations $\zset{F}$ is invertible, i.e., there exists a sequence of equations $\zset{F}^{-1}$ such that $\xAny = \ffinverse{\xAny}$ (in other words, the exogenous variables are uniquely identifiable via the abduction step).}

\begin{equation}
  \begin{aligned}    
    \xSCFScalar_i = ~& [i \in I] \cdot (\xFScalar_i + \deltaScalar_i) \\
                     &~~ + [i \notin I] \cdot \big( \xFScalar_i + f_i(\pascf_i) - f_i(\paf_i) \big). 
  \end{aligned}
  \label{eqn:general_assignment_formulation_hard}
\end{equation}
% \[
% \xSCFScalar_i =
% \begin{cases}
% 	\xFScalar_i + \deltaScalar_i              & i \in I    \\
% 	\xFScalar_i + f_i(\pascf_i) - f_i(\paf_i) & i \notin I \\
% \end{cases}
% \]
%
In words,  the counterfactual value of the $i$-th feature, $\xSCFScalar_i$, takes the value $\xFScalar_i +\deltaScalar_i$ if such feature is intervened upon (i.e., $i \in I$). Otherwise, $\xSCFScalar_i$ is computed as a function of both the factual and counterfactual values of its parents, denoted respectively by $f_i(\paf_i)$ and $f_i(\pascf_i)$. 
The closed-form expression in \eqref{eqn:general_assignment_formulation_hard} can replace the counterfactual constraint in \eqref{eqn:nearest-cf-causal}, i.e., $\xSCF = \fafinverse{\xF}$, after which the optimization problem may be solved by building on existing frameworks for generating nearest counterfactual explanations, including gradient-based, evolutionary-based, heuristics-based, or verification-based approaches as referenced in Section \ref{sec:020background}.
While out of scope of the current work, for the demonstrative examples below, we extended the open-source code of MACE \cite{karimi2020model}; we will submit a pull-request to the respective repository.

% \hl{Supporting both forms of interventions is important ..., we review work that attempts to incorporate causal relations in eq 2 in  related work...}

% \textbf{Remark \#1:} We assume the time horizon is such that the effects of all interventions have played out through the entire model, before the result is again evaluated on $\htheta$. E.g., the bank asks the individual to wait a year before re-applying for a loan, to allow for income-based savings to accumulate.

% \textbf{Remark \#2:} Importantly, 
% Finally, the closed-form relations in \eqref{eqn:general_assignment_formulation_hard} and \eqref{eqn:general_assignment_formulation_soft} can replace the counterfactual constraint in \eqref{eqn:nearest-cf-causal}, i.e., $\xSCF = \fafinverse{\xF}$, after which the optimization problem may be solved by building on existing frameworks for generating nearest counterfactual explanations, including gradient-based, evolutionary-based, heuristics-based, or verification-based approaches as referenced in Section \ref{sec:020background}.
% %
% While out of scope of the current work, for the demonstrative examples below, we extended the open-source code of MACE \cite{karimi2020model}; we will submit a pull-request to the respective repository.

\subsection{Demonstration}
\label{section:illustration}
We showcase our proposed formulation by comparing the actions recommended by existing (nearest) counterfactual explanation methods, as in \eqref{eqn:nearest-cf-additive}, to the ones generated by the proposed minimal intervention formulation in~\eqref{eqn:nearest-cf-causal}. 
We recall that prior literature has focused on generating counterfactual explanations or CFE-based actions, which as shown above lack optimally or feasibility guarantees in non-independent worlds.
Thus, to the best of our knowledge, there exists no baseline approach in the literature that guarantees algorithmic recourse.
The experiments below serve as an illustration of the sub-optimality of existing approaches relative to our proposed formulation of recourse via minimal intervention. Section~\ref{sec:060discussion} presents a detailed discussion on practical considerations.

We consider two settings: i) a synthetic setting where $\scmModel$ follows Figure~\ref{figure:working_example}; and ii) a real-world setting based on the \texttt{german} credit dataset~\cite{bache2013uci}, where $\scmModel$ follows Figure~\ref{figure:working_example_extended}.
We computed the cost of actions as the $\ell_1$ norm over normalized feature changes to make effort comparable across features, i.e., $\mathrm{cost}(\cdot; \xF) = \sum_{i \in I} |\deltaScalar_i| / R_i$, where $R_i$ is the range of feature $i$.

For the \emph{synthetic setting}, we generate data following the model in Figure~\ref{figure:working_example}, where we assume $\zrv{X}_1 \coloneqq \zrv{U}_1$, $\zrv{X}_2 \coloneqq 3 / 10 \cdot \zrv{X}_1 + \zrv{U}_2$, with $\zrv{U}_1 \sim \$10000 \cdot \mathrm{Poission}(10)$ and  $\zrv{U}_2 \sim \$2500 \cdot \mathcal{N}(0, 1)$; 
and the predictive model $\htheta = \mathrm{sgn}(\zrv{X}_1 + 5 \cdot \zrv{X}_2 - \$225000)$. %, as described in \S\ref{sec:010introduction}.
Given $\xF = [\$75000, \$25000]^T$, solving our formulation, \eqref{eqn:nearest-cf-causal}, identifies the optimal action set $\actionSetStar = \doop(\zrv{X}_1 \coloneqq \xFScalar_1 + \$10000)$ which results in $\xSCFnearest = \fastarfinverse{\xF} = [\$85000, \$28000]^T$, whereas solving previous formulations, \eqref{eqn:nearest-cf-additive}, yields $\deltaVectorStar = [\$0, +\$5000]^T$ resulting in $\xCFEnearest = \xF + \deltaVectorStar = [\$75000, \$30000]^T$.
Importantly, while $\xSCFnearest$ appears to be at a further distance from $\xF$ compared to $\xCFEnearest$, achieving the former is less costly than the latter, specifically,  $\mathrm{cost}(\deltaVectorStar; \xF) \approx 2 ~ \mathrm{cost}(\actionSetStar; \xF)$.

As a \emph{real-world setting}, we consider a subset of the features in the \texttt{german} credit dataset. 
The setup is depicted in Figure \ref{figure:working_example_extended}, where $\zrv{X}_1$ is the individual's gender (treated as immutable), $\zrv{X}_2$ is the individual's age (actionable but can only increase), $\zrv{X}_3$ is credit given by the bank (actionable), $\zrv{X}_4$ is the repayment duration of the credit (non-actionable but mutable), and $\hat{\zrv{Y}}$ is the predicted customer risk, according to $\htheta$ (logisitic regression or decision tree).
We learn the structural equations by fitting a linear regression model to the child-parent tuples.
We will release the data, and the code used to learn models and structural equations.

Given the setup above, for instance, for the individual $\xF = [\texttt{Male}, 32, \$1938, 24]^T$ identified as a risky customer, solving our formulation, \eqref{eqn:nearest-cf-causal}, yields the optimal action set $\actionSetStar = \doop(\{\zrv{X}_2 \coloneqq \xFScalar_2 + 1, \zrv{X}_3 \coloneqq \xFScalar_3 - \$800\})$ which results in $\xSCFnearest = \fastarfinverse{\xF} = [\texttt{Male}, 33, \$1138, 22]^T$, whereas solving \eqref{eqn:nearest-cf-additive} yields $\deltaVectorStar = [\texttt{N/A}, +6, 0, 0]^T$ resulting in $\xCFEnearest = \xF + \deltaVectorStar = [\texttt{Male}, 38, \$1938, 24]^T$. 
%
% \comment{0.015469 vs. 0.02678}
Similar to the toy setting, we observe a $\%42$ decrease in effort required of the individual when using the action by our method, since our cost function states that waiting for six years to get the credit approved is more costly than applying the following year for a lower ($-\$800$) credit amount.  
%
% More generally,
We extend our analysis to a population level, and observe that for 50 negatively affected test individuals, previous approaches suggest actions that are on average $\%39 \pm \%24$ and $\%65 \pm \%8$ more costly than our approach when considering, respectively, a logistic regression and a decision tree as the predictive model $\htheta$. 

The demonstrations above confirm our theoretical analysis that MINT-based actions from \eqref{eqn:nearest-cf-causal} are less costly and thus more beneficial for affected individuals than existing CFE-based actions from \eqref{eqn:nearest-cf-additive} that fail to utilize the causal relations between variables.

\vspace{\negspace}
\section{Towards Realistic Interventions}
\label{sec:050realistic}
\vspace{\negspace}
In Section \ref{sec:040methodology}, we formulated algorithmic recourse by considering the causal relations between features in the real world.
Our formulation minimized the cost of actions, which were carried out as \emph{structural} interventions on the corresponding graph. %, $\scmGraph$.
Each intervention proceeds by \emph{unconditionally} \emph{severing all edges} incident on the intervened node, fixing the post-manipulation distribution of a \emph{single} variable to \emph{one deterministic} value.
While intuitive appealing and powerful, structural interventions are in many ways the simplest type of interventions, and their ``simplicity comes at a price: foregoing the possibility of modeling many situations realisitically''~\cite{eberhardt2007causation, korb2004varieties}.
Below, we extend \eqref{eqn:nearest-cf-causal} and \eqref{eqn:general_assignment_formulation_hard} to add flexibility and realism to the types of interventions performed by the individual.
Notably, there is nothing inherent to an SCM that a priori determines the \emph{form}, \emph{feasibility}, or \emph{scope} of intervention; instead, these choices are delegated to the individual and are made based on a semantic understanding of the modeled variables.

\subsection{On the Form of Interventions}

The demonstrations in Section~\ref{section:illustration} primarily focused on actions performed as \emph{structural (a.k.a., hard)} interventions~\cite{pearl2000causality} where all incoming edges to the intervened node are severed (see \eqref{eqn:general_assignment_formulation_hard}).
Hard interventions are particularly useful for Randomized Control Trial (RCT) settings where one aims to evaluate (isolate) the causal effect of an action (e.g., effect of aspirin on patients with migraine) on the population by randomly assigning individuals to treatment/control groups, removing the influence of other factors (e.g., age).

In the context of algorithmic recourse, however, an individual performs actions in the real world, and therefore must play the rules governing the world.
In earlier sections, these rules (captured in an SCM) guided the search for an optimal set of actions by modelling actions along with their consequences.
The rules also determine the form of an intervention, e.g., specifying whether an intervention cancels out or complements existing causal relations. % existing causal effect relations.

For instance, consider \textbf{Example~1}, where an individual chooses to increase their bank balance (e.g., through borrowing money from family, i.e., a deliberate action/intervention on $\zrv{X}_2$ while continuing to put aside a portion of their income (i.e., retaining the relation $\zrv{X}_2 \coloneqq 3 / 10 \cdot \zrv{X}_1 + \zrv{U}_2$).
Indeed, it would be unwise for a recommendation to suggest abandoning saving habits.
In such a scenario, the action would be carried out as an \emph{additive (a.k.a., soft)} intervention~\cite{eberhardt2007interventions}.
Such interventions \emph{do not} sever graphical edges incident on the intervened node and continue to allow for parents of the node to affect that node.
%
% Complementarily
Conversely, in \textbf{Example~2}, recourse recommendations may suggest performing a structural intervention on temperature, e.g., by creating a climate controlled green-house, to cancel the natural effect of altitude change on temperature.

The previous examples illustrate a scenario where an individual/agriculture team actually have the agency to choose which type of intervention to perform. % a structural intervention, but prefers an additive intervention instead.
However, it is easy to conceive of examples where such an option does not exist.
For instance, as part of a medical system's recommendation, we might consider adding $5$ \si{mg/l} of insulin to a patient with diabetes with a certain blood insulin level~\cite{pearl2016causal}.
This action cannot disable pre-existing mechanisms regulating blood insulin levels and therefore, the action can only be performed additively.
Conversely, one may also consider another example from the medical domain whereby the only treatment of malignancy may be through a surgical (structural) amputation.\footnote{See, e.g., \url{https://www.cancer.org/cancer/bone-cancer/treating/surgery.html}.}

Just as structural interventions were supported in our framework via a closed-form expression (see \eqref{eqn:general_assignment_formulation_hard}), additive interventions can be encoded through an analogous assignment formulation:
% Additive interventions are easily handled in our framework, where the general assignment formulation \eqref{eqn:general_assignment_formulation_hard} is updated for variables that can be intervened upon in an additive manner, as in \eqref{eqn:general_assignment_formulation_soft}.
% \vspace{2mm}
\begin{equation}
  \begin{aligned} % \nonumber
    % \xSCFScalar_i = [i \in I] \cdot (x^F_i + \deltaScalar_i) + \big( \xFScalar_i + f_i(\pascf_i) - f_i(\paf_i) \big) \\
    \xSCFScalar_i = [i \in I] \cdot \deltaScalar_i + \big( \xFScalar_i + f_i(\pascf_i) - f_i(\paf_i) \big) \\
  \end{aligned}.
  \label{eqn:general_assignment_formulation_soft}
\end{equation}

The choice of whether interventions should be applied in a additive/soft or structural/hard manner depends on the variable semantic~\cite{barocas2020hidden}, and should be decided prior to solving \eqref{eqn:nearest-cf-causal}.

\subsection{On the Feasibility of Interventions}

We saw in Section \ref{sec:030causal_interpretation} that earlier works motivated the addition of \emph{feasibility} constraints as a means to provide more actionable recommendations for the individual seeking recourse~\cite{ustun2019actionable}.
There, the \emph{actionability} (a.k.a. \emph{mutability}) of a feature was determined based on the feature semantic and value in the factual instance, marking those features which the individual has/lacks the agency to change (e.g., bank balance vs. race).
While the interchangeable use of definition holds under an independent world, it fails when operating in most real-world settings governed by a set of causal dependencies.
We study this subtlety below.

In an independent world, any change to variable $\zrv{X}_i$ could come about only via an intervention on $\zrv{X}_i$ itself. Therefore, immutable and non-actionable variables overlap.
In a dependent world, however, changes to variable $\zrv{X}_i$ may arise from an intervention on $\zrv{X}_i$ or through changes to any of the ancestors of $\zrv{X}_i$. %, i.e., $\an_i$.
In this more general setting, we can tease apart the definition of \emph{actionability} and \emph{mutability}, and distinguish between three types of variables:
(i) immutable (and hence non-actionable), e.g., race;
(ii) mutable but non-actionable, e.g., credit score; and
(iii) actionable (and hence mutable), e.g., bank balance.
Each type requires special consideration which we show can be intuitively encoded as constraints amended to $\actionSet \in \mathcal{F}$ from \eqref{eqn:nearest-cf-causal}.

\textbf{Immutable:}
We posit that the set of immutable (and hence non-actionable) variables should be closed under ancestral relationships given by the model, $\scmModel$.
This condition parallels the ancestral closure of \emph{protected} attributions in~\cite{kusner2017counterfactual}.
This would ensure that under no circumstance would an intervention on an ancestor of an immutable variable change the immutable variable.
Therefore, for an immutable variable $\zrv{X}_i$, the constraint $[i \notin I] = 1 $ recursively necessitates the fulfillment of additional constraints $[j \notin I] = 1 ~ \forall ~ j \in \pa_i$ in $\mathcal{F}$.
For instance, the immutability of race triggers the immutability of birthplace.

\textbf{Mutable but non-actionable:}
To encode the conditions for mutable but non-actionable variables, we note that while a variable may not be directly actionable, it may still change as a result of changes to its parents.
For example, the financial credit score in Figure \ref{figure:working_example_extended} may change as a result of interventions to salary or savings, but is not itself directly intervenable.
Therefore, for a non-actionable but mutable variable $\zrv{X}_i$, the constraint $[i \notin I] = 1$ is sufficient and does not induce any other constraints.

\textbf{Actionable:}
In the most general sense, the actionable feasibility of an intervention on $\zrv{X}_i$ may be contingent on a number of conditions, as follows:
(a) the pre-intervention value of the intervened variable (i.e., $\xFScalar_i$);
(b) the pre-intervention value of other variables (i.e., $\{ \xFScalar_j \}_{j \subset [d] \setminus i}$);
(c) the post-intervention value of the intervened variable (i.e., $\xSCFScalar_i$); and
(d) the post-intervention value of other variables (i.e., $\{ \xSCFScalar_j \}_{j \subset [d] \setminus i}$).
Such feasibility conditions can easily be encoded into $\mathcal{F}$; consider the following scenarios:

(a) an individual's age can only increase, i.e.,
$[\xSCFScalar_{age} \ge \xFScalar_{age}]$;
(b) an individual cannot apply for credit on a temporary visa, i.e.,\\
\hbox{$[\xFScalar_{visa} = \texttt{PERMANENT}] \ge [\xSCFScalar_{credit} = \texttt{TRUE}]$};

(c) an individual may undergo heart surgery (an additive intervention) only if they won't remiss due to sustained smoking habits, i.e.,
$[\xSCFScalar_{heart} \not= \texttt{REMISSION}]$; and

(d) an individual may undergo heart surgery only \emph{after} their blood pressure is regularized due to medicinal intervention, i.e.,
$[\xSCFScalar_{bp} = \texttt{O.K.}] \ge [\xSCFScalar_{heart} = \texttt{SURGERY}]$.

In summary, while previous works on algorithmic recourse distinguished between actionable, conditionally actionable,\footnotemark~and immutable variables~\cite{ustun2019actionable}, we can now operate on a more realistic \emph{spectrum} of variables, ranging from conditionally soft/hard actionable, to non-actionable but mutable, and finally to immutable and non-actionable variables.
Finally, we remind that feasibility is a distinct notion from plausibility; whereas the former restricts actions $\actionSet \in \mathcal{F}$ to those that can be performed by the individual, the latter determines the likeliness of the counterfactual instance $\xSCF = \fafinverse{\xF} \in \mathcal{P}$ resulting from those actions.
For instance, building on the earlier example, although an individual with similar attributes and higher credit score may exist in the dataset (i.e., plausible), directly acting on credit score is not feasible.

\footnotetext{\citet{ustun2019actionable} also support conditionally actionable features (e.g., age or educational degree) with conditions derived only from $\xFScalar_i$ as in (a). We generalize the set of conditions to support actions conditioned on the value of other variables as in (b), additive interventions in (c), and sequential interventions as in (d).}

\subsection{On the Scope of Interventions}

One final assumption has been made throughout our discussion of actions as interventions which pertain to the one-to-one mapping between an action in the real world and an intervention on a endogenous variable in the structural causal model (which in turn are also input features to the predictive model).
As exemplified in~\cite{barocas2020hidden}, it is possible for some actions (e.g., finding a higher-paying job) to simultaneously intervene on multiple variables in the model (e.g., income and length of employment).
Alternatively, for \textbf{Example~2}, choosing a new paddy location is equivalent to intervening jointly on several input features of the predictive model (e.g., altitude, radiation, precipitation).
Such confounded/correlated interventions, referred to as \emph{fat-hand}/\emph{non-atomic} interventions~\cite{eberhardt2007interventions}, will be explored further in follow-up work, by modelling the world at different causally consistent levels~\cite{beckers2019abstracting, rubenstein2017causal}.

\vspace{\negspace}
\section{Discussion}
\label{sec:060discussion}
\vspace{\negspace}
In this paper, we have focused on the problem of algorithmic recourse, i.e., the process by which an individual can change their situation to obtain a desired outcome from a machine learning model.
First, using the tools from causal reasoning (i.e., structural interventions and counterfactuals), we have shown that in their current form, counterfactual explanations only bring about agency for the individual to achieve recourse in  unrealistic  settings.
In other words, counterfactual explanations do not translate to an \emph{optimal} or \emph{feasible} set of actions that would favourably change the prediction of $\htheta$ if acted upon.
This shortcoming is primarily due to 
the lack of consideration of causal relations governing the world and thus, the failure to model the downstream effect of actions in the predictions of the machine learning model. 
In other words, although ``counterfactual'' is a term from causal language, we observed that existing approaches fall short in terms of taking causal reasoning into account when generating counterfactual explanations and the subsequent recourse actions.
Thus, building on the statement by~\citet{wachter2017counterfactual} that counterfactual explanations ``do not rely on knowledge of the causal structure of the world,'' it is perhaps more appropriate to refer to existing approaches as \emph{contrastive}, rather than \emph{counterfactual}, explanations~\cite{dhurandhar2018explanations, miller2019explanation}.

To directly take causal consequences of actions into account, we have proposed a fundamental reformulation of the recourse problem, 
where actions are performed as interventions
and we seek to minimize the cost of performing actions in a world governed by a set of (physical) laws captured in a structural causal model.
Our proposed formulation in \eqref{eqn:nearest-cf-causal}, complemented with several examples and a detailed discussion, allows for \emph{recourse through minimal interventions} (MINT), that when performed will result in a \emph{structural counterfactual} that favourably changes the output of the model.

Next, we discuss the work most closely related to ours, the main limitation of the proposed recourse approach, and propose future venues for research to address such shortcomings. 

\textbf{Related work. }
A number of authors have argued for the need to consider causal relations between variables \cite{wachter2017counterfactual, ustun2019actionable, karimi2020model, mothilal2019explaining}, generally based on the intuition that changing some variables may have effects on others. % \hl{independently may not be feasible}. + 2018 paper
In the original counterfactual explanations work, \citet{wachter2017counterfactual} also suggest that ``counterfactuals generated from an accurate causal model may ultimately be of use to experts (e.g., to medical professionals trying to decide which intervention will move a patient out of an at-risk group)''.
Despite this general agreement, to the best of our knowledge, only two works have attempted to technically formulate this requirement. % using. the need of causal constraints in the formulation of nearest counterfactuals.

In the first work, \citet{joshi2019towards} study recourse in causal models under confounders and with predetermined treatment variables.
In this work, a distribution over hidden confounders is first estimated along with a mapping from the attributes $\xAny$ to hidden confounders, i.e., $G_\theta^{-1}(\xAny) = \zvec{z}$.
Then, under each intervention on treatment variables, explanations are generated following \eqref{eqn:nearest-cf} with the plausibility term constraining the inverse of the counterfactual instance (i.e., $G_\theta^{-1}(\xAny)$) to the approximated confounding distribution. % recourse is solved similar to (1)
% Here, counterfactual distributions are first estimated under a fixed intervention on predetermined treatment variables. Then, under each intervention, recourse is solved similar to \eqref{eqn:nearest-cf} with the plausibility term constraining the counterfactual instance to the approximated counterfactual distribution. 
%
% Similar to the first related work, this work also suffers from the same limitations as other CFE-based recourse approaches presented in Section~\ref{sec:cfe_limitations} in that a returned counterfactual instance would not imply feasible or optimal actions for recourse.
%
In this work, we instead optimize for recourse actions rather than counterfactual instances that result from those action.

% \amir{still argumenting...}
In the second work, \citet{mahajan2019preserving} present a modified version of the distance function in \eqref{eqn:nearest-cf}, amending the \emph{standard proximity loss} between factual and counterfactual instances with a \emph{causal regularizer} to encourage the counterfactual value of each endogenous variable to be close to the value of that variable had it been assigned via its structural equation. %, i.e.,
%\begin{equation*}
%   \begin{aligned}
%     \mathrm{distCausal}(\xF, \xCFE) = \sum_{i \in \zset{U}} \text{dist}(\xF_i, \xCFE_i) + \sum_{i \in \zset{X}} \text{dist}(\xCFEScalar_i, f_i(\pa_i^\texttt{CFE})) ,
%   \end{aligned}
%   \label{equation:mahajan}
% \end{equation*}
% that counterfactual values of variables be similar to the value they would take under additive interventions to their ancestors. %
%
%\noindent where $\zset{U}$ and $\zset{X}$ correspond to exogenous and endogenous variables, respectively, and $f_i(\pa^\texttt{CFE}) = \mathbb{E}[\xCFEScalar_i ~ | ~ \pa_i^\texttt{CFE}]$ is the approximated structural equation. 
%
Beyond the uncertainty regarding the strength of regularization (which would mean causal relations may not be guaranteed), and why the standard proximity loss only iterates over the exogenous variables (which from a causal perspective, are characteristics that are shared across counterfactual worlds~\citep[footnote 4]{kusner2017counterfactual}), this approach suffers from a primary limitation in its causal treatment:
the causal regularizer would penalize any variable whose value deviated away from its structurally assigned value.
While on the surface this ``preservation of causal relations'' seems beneficial, such an approach would discourage interventions (additive or structural) on non-root variables, which would, by design, change the value of the intervened-upon variable away from its structurally assigned value.
Instead, the regularizer would encourage interventions on variables that would not be penalized as such, i.e., root variables, which may not be contextually acceptable as root notes typically capture sensitive characteristic of the individual (e.g., birthplace, age, gender).
The authors suggest (in the Appendix of \cite{mahajan2019preserving}) that one may consider those variables, upon which (structural) interventions are to be performed, as exogenous. In this manner, interventions would not be penalized and down-stream effects of interventions would still be preserved when searching for the nearest counterfactual instance.
We argue, however, that such an approach suffers from the same limitations as other CFE-based recourse approaches presented in Section~\ref{sec:cfe_limitations} in that a returned counterfactual instance would not imply feasible or optimal actions for recourse.
Finally, without an explicit abduction step and without assumptions on the form of structural equations, it is unclear how the authors infer and combine individual-specific characteristics (as embedded in the background variables) with the effect of ancestral changes to compute the counterfactual. % an intervention assignment
We believe the problems above will be mostly resolved when minimizing over the cost of actions instead of distance over counterfactuals as we have done in this work.

\textbf{Practical limitations.}
The primary limitation of our formulation in~\eqref{eqn:nearest-cf-causal} is its reliance on the true causal model of the world, subsuming both the graph, and the structural equations.
In practice, the underlying causal model is rarely known, which suggests that the counterfactual constraint in \eqref{eqn:nearest-cf-causal}, i.e., $\xSCF = \fafinverse{\xF}$, may not be (deterministically) identifiable.
We believe this is a valid criticism, not just of our work, but of any approach suggesting actions to be performed in the real world for consequential decision-making.
Importantly, beyond recourse, the community on algorithmic fairness has echoed the need for causal counterfactual analysis for fair predictions, and have also voiced their concern about untestable assumptions when the true SCM is not available \cite{barabas2017interventions, chiappa2019path, kilbertus2017avoiding, kusner2017counterfactual, russell2017worlds}.

% more we argue that existing approaches already implicitly make causal assumptions (i.e., that of independence, or feasible and cost-free interventions).
%
% Therefore, at worst case we replace an imperfect assumption about the data generative process with another.

Perhaps more concerningly, our work highlights the implicit causal assumptions made by existing approaches (i.e., that of independence, or feasible and cost-free interventions), which may portray a false sense of recourse guarantees where one does not exists (see \textbf{Example~2} and all of Section \ref{sec:cfe_limitations}).
%
% What our works aims to contribute 
Our work aims to highlight existing imperfect assumptions, and to offer an alternative formulation, backed with proofs and demonstrations, which would guarantee recourse if assumptions about the causal structure of the world were satisfied.
Future research on causal algorithmic recourse may benefit from the rich literature in causality that has developed methods to verify and perform inference under various assumptions \cite{peters2017elements}.
% localize the assumptions to the causal generative process.} which 
%
Thus, we consider further discussion on causal identifiability to be out of scope of this paper, as it remains as an open and key question in the Ethical ML community.

This is not to say that counterfactual explanations should be abandoned altogether. On the contrary, we believe the counterfactual explanations hold promise for ``guided audit of the data'' \cite{wachter2017counterfactual} and evaluating various desirable model properties, such as robustness \cite{sharma2019certifai, hancox2020robustness} or fairness \cite{sharma2019certifai, gupta2019equalizing, ustun2019actionable, karimi2020model}.
Besides this, it has been shown that designers of interpretable machine learning systems use counterfactual explanations for predicting model behavior \cite{lage2019evaluation} or uncovering inaccuracies in the data profile of individuals \cite{venkatasubramanianphilosophical}.
Complementing these offerings of counterfactual explanations, we offer minimal interventions as a way to guarantee algorithmic recourse in general settings, which is not implied by counterfactual explanations.
% our work merely aims to caution against the general use of counterfactual explanations as actionable recourse recommendations, and for that we offer MINT.

% Moreover, we emphasize that inferring interventional effects in causal models depends on modelling assumptions and which variables are deemed endogenous vs exogenous.
% %
% Our formulation optimizes for interventions on endogenous variables that are directly actionable by the individual seeking recourse or an assigned fiduciary~\cite{venkatasubramanianphilosophical}.
% %
% This assumption, can be extended to consider changes to exogenous variables via actions of a collective of individuals in a social context~\cite{miller2019explanation}. An example of this may be the active protests in the U.S.
% %
% We consider that the social context and collective actions leads to the data that our algorithms are trained on for which we seek algorithmic recourse.
% %

\textbf{Future work. }
In future work, we aim to focus on overcoming the main assumption of our formulation: the availability of the true world model, $\scmModel$.
An immediate first step involves learning the true world model (partially or fully)~\cite{eberhardt2017introduction, glymour2019review, malinsky2018causal}, and studying potential inefficiencies that may arise from partial or imperfect knowledge of the causal model governing the world.
Furthermore, while additive noise models are a broadly used class of SCMs for modeling real-world systems, further investigation into the effects of confounders (non-independent noise variables), the presence of only the causal graph, as well as cyclic graphical models for time series data (e.g., conditional interventions), would extend the reach of algorithmic recourse to even broader settings.

In Section~\ref{sec:050realistic}, we presented feasibility constraints for a wide range of settings, including dynamical settings in which one intervention enables the preconditions of another.
An interesting line of future research would involve combining the causal intervention-based recourse framework, as presented in our work, with multi-stage planning strategies such as \cite{ramakrishnan2019synthesizing} to generate optimal sequential actions.
%
% \hl{other types of actions? beyond those from SCM? planning?}
% related?
% http://adres.ens.fr/IMG/pdf/08032002.pdf
% https://arxiv.org/pdf/1905.01195.pdf

Finally, the examples presented in relation to the form and feasibility of intervention serve only to illustrate the flexibility of our formulation in supporting a variety of real-world constraints. % conditions/constraints.
They do not, however, aim to provide an authoritative definition of how to interpret variables and the context- and individual-dependent constraints for recourse as highlighted by other works~\cite{barocas2020hidden, kohler2018eddie}.
Future cross-disciplinary research would benefit from accurately defining the variables and relationships and types of permissible interventions in consequential decision-making settings.
Relatedly, future research would also benefit from a study of properties that cost functions should satisfy (e.g., individual-based or population-based, monotonicity) as the primary means to measure the effort endured by the individual seeking recourse.

% \section*{Broader Impact}
% \input{070broader_impact.tex}

% \clearpage

% \begin{ack}
% The authors would like to thank Adrián Javaloy Bornás and Julius von K\"ugelgen for their valuable feedback on drafts of the manuscript.
% \end{ack}

\begin{acks}
The authors would like to thank Adrián Javaloy Bornás and Julius von K\"ugelgen for their valuable feedback on drafts of the manuscript.
\end{acks}

\clearpage
% \small
\bibliographystyle{ACM-Reference-Format}
\bibliography{references}

\newpage
\appendix
\numberwithin{equation}{section}

\section{Proofs}
\label{app:proofs}
\newcommand{\ndesc}{\texttt{nd}}
\newcommand{\desc}{\texttt{d}}

\subsection{Proof of Proposition \ref{prop:no_children}}

\CFEACTION*

\begin{proof}
The setting assumes that the causal graph $\scmGraph$ is available such that the parent set for each variable is known.
Let $\desc(X)$ and $\ndesc(X)$ denote the sets of descendants and non-descendants of the variable $X$ according to $\scmGraph$, respectively.
For multiple intervened-upon variables, we define: $$\zset{X}_I:=\{X_i\}_{i \in I},$$ $$\ndesc(\zset{X}_I):=\cap_{i \in I} \ndesc(X_i),$$ $$\desc(\zset{X}_I):=\zset{X}\setminus(\zset{X}_I\cup \ndesc(\zset{X}_I)).$$ 
Note that, by definition, $\zset{X}_I$, $\ndesc(\zset{X}_I)$, and $\desc(\zset{X}_I)$ form a partition of the set of all variables $\zset{X}$.

To prove the iff conditional, we prove each direction separately. For ease of exposition, we define 
\[
\underbrace{\xSCF= \xCFEnearest := \xF + \deltaVectorStar}_{\mathbf{p}} \quad \Longleftrightarrow \quad \underbrace{\desc(\zset{X}_I) = \varnothing}_{\mathbf{q}}
\]

\noindent where we recall the remark that given $\deltaVectorStar$, an individual seeking recourse may intervene on any arbitrary subset of observed variables $\zset{X}_I$, as long as $(\deltaScalarStar_i \neq 0) \implies (i \in I)$.

$\mathbf{q \Longrightarrow p}$: Borrowing the closed-form expression of a structural counterfactual from \eqref{eqn:general_assignment_formulation}, we have

\begin{equation}
\xSCFScalar_i = \begin{cases}
    \xFScalar_i + \deltaScalarStar_i          & i \in I     \\
    \xFScalar_i + f_i(\pascf_i) - f_i(\paf_i) & i \not\in I
\end{cases}
\end{equation}
\noindent which can be broken down further to specify the descendants and non-descendants of intervened upon variables, as 
\begin{equation}
\xSCFScalar_i = \begin{cases}
    \xFScalar_i + \deltaScalarStar_i          & i \in I                        \\
    % \xFScalar_i + f_i(\pascf_i) - f_i(\paf_i) & i \text{ descendant of } I     \\
    % \xFScalar_i + f_i(\pascf_i) - f_i(\paf_i) & i \text{ non-descendant of } I \\
    \xFScalar_i + f_i(\pascf_i) - f_i(\paf_i) & i \in \desc(\zset{X}_I)  \\
    \xFScalar_i + f_i(\pascf_i) - f_i(\paf_i) & i \in \ndesc(\zset{X}_I) \\
\end{cases}
\label{eqn:app_assignment}
\end{equation}

By assumption, $\desc(\zset{X}_I) = \varnothing$, so the second case never holds. 

Furthermore, since structural interventions leave non-descendant variables unaffected, we have that $$\pascf_i = \paf_i \quad \forall i\in \ndesc(\zset{X}_I).$$
Consequently, $$f_i(\pascf_i) - f_i(\paf_i) = f_i(\paf_i) - f_i(\paf_i) = 0 \quad \forall i\in \ndesc(\zset{X}_I).$$

In summary, we have
\begin{equation}
\xSCFScalar_i = \begin{cases}
    \xFScalar_i + \deltaScalarStar_i            & i \in I                        \\
    \xFScalar_i                            & i \in \ndesc(\zset{X}_I) \\
\end{cases}
\end{equation}

\noindent which, upon realising that $(\deltaScalarStar_i \neq 0) \implies (i \in I)$, reduces to $\xSCF = \xCFEnearest := \xF + \deltaVectorStar$ as desired.

$\mathbf{\neg q \Longrightarrow \neg p}$: Starting with the negation of $\mathbf{q}$, we have the $\exists ~ k \in I \text{ s.t. } \desc(X_k) \not= \varnothing$. It is assumed that $\deltaScalarStar_k \not=0$ (i.e., we are not performing a non-altering intervention on $X_k$), then using the same expression for structural counterfactuals in \eqref{eqn:app_assignment}, there in general exists a descendant of $X_k$ for which the value of its ancestors change under intervention, i.e., $\exists ~ l \in \desc(\zset{X}_I) \text{ s.t. } f_l(\pascf_l) - f_l(\paf_l) \not= 0$. Thus, $\xSCFScalar_l \not= \xFScalar_l$ and thus $\xSCF \not= \xCFEnearest := \xF + \deltaVectorStar$.
Our proof ignores special cases such as piece-wise constant structural equations, where for some $\deltaScalarStar_i \neq 0$, the descendant of $X_i$ remains invariant. These rare cases can be thought of as locally violating causal minimality \cite[Sec. 6.5]{peters2017elements} and are thus disregarded.
\end{proof}

\subsection{Proof of Corollary \ref{coro:indep_world}}

\CFEINDEP*

\begin{proof}
If the true world $\scmModel$ is independent, then by definition the set of descendants for all variables is the empty set. Thus, the statement follows directly from Proposition \ref{prop:no_children}.
\end{proof}

\subsection{Proof of Proposition \ref{prop:suboptimality}}

\SUBOPTIMAL*

\begin{proof}
Having assumed that both $\actionCFE, \actionSetStar \in \mathcal{F}$, and considering that $\actionSetStar$ is the optimal solution of \eqref{eqn:nearest-cf-causal} constrained to $\mathcal{F}$, it follows from definition of optimality that $\mathrm{cost}(\actionSetStar; \xF)  \leq \mathrm{cost}(\actionCFE; \xF) $.
\end{proof}

% \section{Working example}
% \label{app:working_example}
% \input{appendix/example}

% \section{Towards Realistic Interventions}
% \label{app:realistic}
% \input{appendix/realistic.tex}

\end{document}